\def\nostatisticaltests{1}
\documentclass[letterpaper]{article}
\usepackage{aaai2027}
\usepackage[hyphens]{url}  \usepackage{graphicx}    \usepackage{natbib}  \usepackage{caption}    

\usepackage{algorithm}
\usepackage{algcompatible}

\usepackage{amsmath,amssymb,amsthm}

\graphicspath{{figures/}}

\newtheorem{theorem}{Theorem}
\newtheorem{lemma}{Lemma}
\newtheorem{assumption}{Assumption}

\title{Probabilistic Focal Search: Accelerating Bounded-Suboptimal Search via Lower-Bound Advancement}

\author{
    Minh Vu Duc\textsuperscript{\rm 1}\footnote{Corresponding author:
    \texttt{minhvd@neu.edu.vn}.},
    Trung Le Huu\textsuperscript{\rm 2},
    Hà Minh Hoàng\textsuperscript{\rm 3},
    Trung Thanh Nguyen\textsuperscript{\rm 4},
    Phuong Khanh Nguyen\textsuperscript{\rm 5},
    Huynh Thi Thanh Binh\textsuperscript{\rm 5}
}
\affiliations{
    \textsuperscript{\rm 1}SLSCM Lab, National Economics University\\
    \textsuperscript{\rm 2}University of Warwick\\
    \textsuperscript{\rm 3}DataOpt Lab, National Economics University\\
    \textsuperscript{\rm 4}CADA Lab, National Economics University\\
    \textsuperscript{\rm 5}Hanoi University of Science and Technology
}
\begin{document}

\maketitle

\begin{abstract}
Bounded-suboptimal search seeks a solution within a factor $w$ of optimal while
reducing search effort. Focal Search (FS) uses heuristic guidance within FOCAL,
the frontier nodes eligible under the threshold $w f_{\min}$, but its
deterministic policy may leave $f_{\min}$ unchanged for many expansions. We
introduce Probabilistic Focal Search (PFS), which follows the FS guided choice
with probability $p$ and expands a minimum-$f$ OPEN node with probability
$1-p$. The latter branch encourages the lower bound to advance, enlarging
FOCAL and admitting nodes that may lead to feasible solutions. By balancing
guidance and lower-bound advancement, this mechanism can reduce time to a
bounded solution when progress is limited by delayed FOCAL admission. As a
secondary transfer experiment, we apply
the same scheduler to Dynamic Potential Search, yielding Probabilistic Dynamic
Potential Search (PDPS). We benchmark PFS against FS on N-Puzzle, Pancake
Sorting, and the
Traveling Salesperson Problem (TSP), and evaluate its anytime extension on the
Generalized Covering TSP (GCTSP), using multiple $w$ and $p$ values. Across
these benchmarks, the largest gains occur
when long $f_{\min}$ plateaus delay useful FOCAL admissions; in such
settings, the probabilistic factor may reduce node expansions by about 90\% or
more (e.g., on N-Puzzle and TSP). For the anytime algorithm family, Anytime
Probabilistic Focal Search (APFS) outperforms all tested algorithms in evaluating anytime
methods on GCTSP. We also observe that
the benefit is smaller when the deterministic search already advances
efficiently (e.g., Pancake Sorting), indicating that the probabilistic factor
is most useful when FOCAL admission is a search bottleneck. The PDPS transfer
shows that the mechanism also transfers to potential guidance, although its
common-success effects remain domain- and bound-dependent.
\end{abstract}

\section{Introduction}

Bounded-suboptimal heuristic search trades optimality for a certificate: given a
factor $w\geq 1$, it seeks a solution of cost at most $wC^*$ while using far
less search than A*. Focal Search (FS) realizes this idea with an admissible
primary heuristic and a secondary priority. It maintains
$\mathit{FOCAL}=\{n\in\mathit{OPEN}:f(n)\leq w f_{\min}\}$ and repeatedly
expands the best eligible node under that secondary priority
\citep{pearl1982studies}. Dynamic Potential Search (DPS) instead ranks OPEN by
a potential that changes with the same lower bound $f_{\min}$
\citep{gilon2016dynamic}.

This separation between certification and guidance is powerful, but it creates
a feedback loop. Guided expansions can remain on one minimum-$f$ plateau; while
$f_{\min}$ is unchanged, so is the eligibility threshold $w f_{\min}$. A good
node just outside that threshold cannot benefit from the secondary policy until
the lower-bound frontier advances. The deterministic policy has no explicit
control for making that happen.

We study a minimal intervention to FS: with probability $p$, retain its guided
FOCAL choice; with probability $1-p$, expand a minimum-$f$ OPEN node. This
yields Probabilistic Focal Search (PFS). The intervention does not relax the
solution envelope. Rather, it spends a controlled fraction of selections on
the frontier defining that envelope. The question is empirical as well as
theoretical: forced OPEN-head work may shorten harmful plateaus, but it is
overhead when FS already advances effectively. As a secondary transfer
experiment, we apply the same scheduler to DPS, yielding Probabilistic Dynamic
Potential Search (PDPS). This comparison tests whether the mechanism transfers
to a different guided policy.

We evaluate this idea on N-Puzzle, Pancake Sorting, and the Traveling
Salesperson Problem (TSP), and evaluate its anytime extension on the metric
Generalized Covering TSP (GCTSP). The results show that probabilistic
lower-bound advancement can improve success and substantially reduce search
effort when it admits useful nodes that the guided policy can exploit; APFS
also leads the evaluated anytime methods on GCTSP. We further analyze cases in
which the intervention provides little or no benefit, explaining how an
already sufficient FOCAL or ineffective guidance over newly admitted nodes can
limit the mechanism. PDPS and APDPS serve as secondary transfer experiments
for this dependence on the guided policy.

This paper contributes:
\begin{itemize}
    \item PFS, a tunable Bernoulli scheduler that interleaves guided FOCAL
    selections with minimum-$f$ OPEN selections, together with a
    bounded-suboptimality proof;
    \item a mechanism analysis identifying when the scheduler helps and when
    it adds overhead: minimum-$f$ selections help when they advance the lower
    bound and admit nodes that the guided policy can exploit, but not when
    FOCAL is already sufficient or the new admissions do not improve guided
    choices; and
    \item a cross-domain evaluation supporting both the scheduler's
    effectiveness and the proposed mechanism. On N-Puzzle, PFS raises success
    from 44.3\% to 83.6\%, while reducing penalized mean expansions by 60.7\%
    and capped mean runtime by 69.8\%. Across both TSP datasets, it reaches at
    least 96.4\% success and reduces these metrics by at least 87.2\% and
    89.0\%. Its smaller gains on Pancake, where FOCAL is already sufficient,
    together with the gains and remaining regressions in the DPS family, are
    consistent with the identified boundary conditions. In anytime GCTSP,
    APFS solves 189/234 medium instances versus 82/234 for
    AFS\ifdefined\nostatisticaltests.\else,
    with paired uncertainty estimates.\fi
\end{itemize}
 \section{Background and Related Work}

Each search node $n$ represents a problem state together with its current path
information. Let $g(n)$ be its best known path cost, let $h(n)$ be an
admissible estimate of the remaining cost, and let $f(n)=g(n)+h(n)$. A*
expands a minimum-$f$ node and returns an optimal solution under standard graph
search assumptions \citep{hart1968formal}. Weighted A* (WA*) instead orders by
$q_w(n)=g(n)+w h(n)$ to obtain a $w$-suboptimal solution more quickly
\citep{pohl1970heuristic}.

\paragraph{Focal Search.}
FS maintains the lower bound
$f_{\min}=\min_{n\in\mathit{OPEN}}f(n)$ and the eligible set
\begin{equation}
 \mathit{FOCAL}=\{n\in\mathit{OPEN}:f(n)\leq w f_{\min}\}.
 \label{eq:focal}
\end{equation}
FOCAL can also be defined using an explicit eligibility threshold $C$ as
$\mathit{FOCAL}(C)=\{n\in\mathit{OPEN}:f(n)\leq C\}$. In this paper, we use
the multiplicative definition in Equation~\eqref{eq:focal}.
It selects an eligible node using a secondary priority $d(n)$, which need not
itself be admissible. This decoupling distinguishes focal search from algorithms
whose ordering and certificate use the same heuristic. Anytime Focal Search
reuses this structure while tightening the bound after incumbent solutions
\citep{cohen2018anytime}; our main study concerns the first bounded solution.

\paragraph{Dynamic Potential Search.}
DPS can be viewed as a special case of FS whose secondary priority is the
negative dynamic potential, $d(n)=-u_w(n)$ \citep{gilon2016dynamic}, where
\begin{equation}
 u_w(n)=\frac{w f_{\min}-g(n)}{h(n)}.
 \label{eq:potential}
\end{equation}
For $h(n)>0$, $u_w(n)\geq1$ exactly when
$f(n)\leq w f_{\min}$. Under exact potential ordering, a maximum-potential
node in OPEN is therefore in FOCAL, so DPS can maximize $u_w(n)$ over OPEN
without maintaining FOCAL explicitly. Unlike FS with a fixed secondary
priority, however, $u_w(n)$ depends on $f_{\min}$. Every change in $f_{\min}$
therefore requires the potentials and their ordering to be refreshed before
the next selection.
Following the weighted-graph treatment of DPS
\citep{gilon2017weighted}, we define an $h=0$ node to have potential $+\infty$ only if
$g(n)\leq w f_{\min}$, and $-\infty$ otherwise. This endpoint convention is
essential for a safe goal test.
 \section{Probabilistic Focal Search}

Algorithm~\ref{alg:probabilistic-selection} presents PFS. It retains the OPEN
and FOCAL organization of FS but uses a probability $p$ to choose between two
selection policies. At iteration $t$, it draws
$Z_t\sim\mathrm{Bernoulli}(p)$, where $\Pr(Z_t=1)=p$ and
$\Pr(Z_t=0)=1-p$. When $Z_t=1$, PFS follows the FS policy and selects the
minimum-$d$ node in FOCAL. When $Z_t=0$, it selects a minimum-$f$ node from
OPEN. Thus $p=1$ recovers FS, while $p=0$ uses A*'s primary ordering.

\begin{algorithm}[tb]
\caption{Probabilistic Focal Search (PFS)}
\label{alg:probabilistic-selection}
\begin{algorithmic}[1]
\REQUIRE start node $s_0$, $w\geq1$, and $p\in[0,1]$
\STATE $g(s_0)\leftarrow0$; $OPEN\leftarrow\{s_0\}$; $CLOSED\leftarrow\emptyset$
\STATE $f_{\min}\leftarrow f(s_0)$; $FOCAL\leftarrow\{s_0\}$
\WHILE{$OPEN\neq\emptyset$}
    \STATE draw $Z\sim\mathrm{Bernoulli}(p)$
    \IF{$Z=1$}
        \STATE $n\leftarrow\arg\min_{u\in FOCAL}d(u)$
    \ELSE
        \STATE $n\leftarrow\arg\min_{u\in OPEN}f(u)$
    \ENDIF
    \IF{$Goal(n)$}
        \STATE \textbf{return} $n$
    \ENDIF
    \STATE remove $n$ from OPEN and FOCAL; add $n$ to CLOSED
    \STATE relax successors; insert/reopen improved successors in OPEN and in FOCAL
    when currently eligible
    \IF{$OPEN\neq\emptyset$}
        \STATE $f'_{\min}\leftarrow\min_{u\in OPEN}f(u)$
        \IF{$f'_{\min}>f_{\min}$}
            \STATE add newly eligible nodes to FOCAL
        \ELSIF{$f'_{\min}<f_{\min}$}
            \STATE rebuild FOCAL using threshold $w f'_{\min}$
        \ENDIF
        \STATE $f_{\min}\leftarrow f'_{\min}$
    \ENDIF
\ENDWHILE
\STATE \textbf{return} failure
\end{algorithmic}
\end{algorithm}

The initialization places the start node in OPEN and FOCAL and sets
$f_{\min}=f(s_0)$. At a non-goal expansion, the selected node is removed from
both frontier structures and its successors are generated. Standard duplicate
relaxation and reopening are abbreviated: every inserted or improved successor
is placed in FOCAL exactly when it satisfies the current threshold. The
algorithm then recomputes $f_{\min}$. If it increases, the enlarged threshold
admits the newly eligible OPEN nodes; if it decreases after reopening, FOCAL is
rebuilt to remove nodes outside the reduced threshold. Both selection branches
choose an eligible node, since the OPEN head has $f=f_{\min}$, so a selected
goal satisfies the same $w f_{\min}$ envelope as in FS.

\paragraph{Secondary transfer to DPS.}
To test whether the scheduler transfers beyond FS, we also instantiate it in
DPS as PDPS. PDPS uses the same Bernoulli choice and minimum-$f$ OPEN branch. It
changes only the guided branch: when $Z=1$, it selects
$n=\arg\max_{u\in FOCAL}u_w(u)$ under Equation~\ref{eq:potential}. Under exact
potential ordering, the maximum over OPEN lies in FOCAL, so this restriction
preserves the DPS choice while keeping only eligible potential groups active.
Thus $p=1$ recovers DPS and $p=0$ again uses A*'s primary ordering. Regardless
of queue ordering, PDPS accepts a selected goal only after the exact check
$g(n)\leq w f_{\min}$; otherwise the goal remains in OPEN.

\begin{assumption}
Edge costs are nonnegative, and the primary heuristic $h$ is admissible and
nonnegative. Standard duplicate detection and reopening maintain the frontier
invariant $f_{\min}\leq C^*$ until a goal is accepted, where $C^*$ is the
optimal solution cost.
\end{assumption}

\begin{theorem}[Certificate preservation]
Any goal returned by first-solution PFS has cost at most $wC^*$.
\end{theorem}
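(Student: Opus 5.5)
The plan is to show that at the moment PFS returns a goal $n$, two inequalities hold: $f(n)\le w f_{\min}$, and $f_{\min}\le C^*$. Together with $h(n)=0$ these give $g(n)\le wC^*$. The second inequality is supplied directly by the Assumption: the frontier invariant $f_{\min}\le C^*$ holds until a goal is accepted, so it holds at the selection step that returns $n$. The first inequality comes from a case split on the Bernoulli draw $Z$ at that iteration. This is where the work lies.

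For the case split, I first establish an invariant of Algorithm~\ref{alg:probabilistic-selection}: at the start of every iteration, the stored $f_{\min}$ equals $\min_{u\in OPEN}f(u)$, and FOCAL equals $\{u\in OPEN: f(u)\le w f_{\min}\}$. I prove this by induction over iterations.
\begin{itemize}
\item Initialization sets $OPEN=FOCAL=\{s_0\}$ and $f_{\min}=f(s_0)$, and $f(s_0)\le w f(s_0)$ because $w\ge1$ and $f(s_0)\ge0$ (heuristics and costs are nonnegative).
\item In a non-goal expansion, the selected node is removed from both structures, and inserted or improved successors enter FOCAL exactly when they meet the current threshold.
\item The update block then handles both directions. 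If $f_{\min}$ rises, newly eligible OPEN nodes are added. If it falls, FOCAL is rebuilt with the smaller threshold.
\end{itemize}
The one subtlety is that successors are tested against the old threshold before $f_{\min}$ is recomputed. I will check that each branch of the update repairs any discrepancy: the increase branch admits the missed nodes, and the rebuild handles the decrease. I expect this bookkeeping check to be the main obstacle, although it is routine.

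With the invariant in hand, the two cases follow. If $Z=1$, then $n\in FOCAL$, so $f(n)\le w f_{\min}$ directly. If $Z=0$, then $n$ is a minimum-$f$ node of OPEN, so $f(n)=f_{\min}\le w f_{\min}$, using $w\ge1$ and $f_{\min}\ge0$. In either case $n$ is a goal with $h(n)=0$; this follows from admissibility and nonnegativity, since $0\le h(n)\le h^*(n)=0$. Hence the cost of the returned path is
\begin{equation*}
g(n)=f(n)\le w f_{\min}\le wC^*.
\end{equation*}
Finally, $g(n)$ is the cost of the actual path to $n$ stored in the search tree, so the returned solution has cost at most $wC^*$.
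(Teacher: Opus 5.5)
Your proof is correct and follows the same route as the paper: because both the guided branch and the OPEN-head branch select an eligible node, the returned goal satisfies $g(n)=f(n)\le w f_{\min}$, and the frontier invariant from the Assumption gives $f_{\min}\le C^*$. Your induction showing that FOCAL always equals the eligible set (including the repair of successors tested against the old threshold) and your derivation of $h(n)=0$ at a goal spell out details that the paper's one-line proof leaves implicit.
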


\begin{proof}
Let $n_g$ be the returned goal. The acceptance condition and frontier invariant
give
\begin{equation}
 C=g(n_g)=f(n_g)\leq w f_{\min}\leq wC^*.
\end{equation}
The randomized branch changes which node is considered, but neither the lower
bound nor the acceptance condition. The result therefore holds for every
$p\in[0,1]$.
\end{proof}

The same certificate transfers to PDPS because it uses the same lower bound and
exact goal-acceptance condition. Appendix~\ref{app:dps-eligibility} verifies
that its guided DPS choice is also eligible under exact potential ordering.

\begin{lemma}
PFS and FS cannot return a goal while $f_{\min}<C^*/w$.
\end{lemma}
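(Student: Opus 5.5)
The argument is a contrapositive built from the acceptance condition, the same ingredient used in the certificate-preservation theorem. I will show that any goal node $n_g$ returned by PFS or FS at an iteration with current lower bound $f_{\min}$ must satisfy $C^*\leq g(n_g)\leq w f_{\min}$. From this, $f_{\min}\geq C^*/w$ at every return. Hence no goal can be returned while $f_{\min}<C^*/w$.

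\textbf{Upper bound $g(n_g)\leq w f_{\min}$.} First I establish that the returned goal is eligible. I split on the branch of Algorithm~\ref{alg:probabilistic-selection} that produced the selection.

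\begin{itemize}
\item If $Z=1$ (which is always the case in FS, i.e.\ $p=1$), then $n_g\in\mathit{FOCAL}$. By the invariant that FOCAL is maintained exactly as in Equation~\eqref{eq:focal}, this gives $f(n_g)\leq w f_{\min}$. This invariant holds because of the insertion rule, the enlargement step when $f_{\min}$ increases, and the rebuild step when $f_{\min}$ decreases.
\item If $Z=0$, then $n_g$ is an OPEN head, so $f(n_g)=f_{\min}\leq w f_{\min}$, using $w\geq1$ and $f_{\min}\geq0$. The latter holds by nonnegativity of edge costs and of $h$.
\end{itemize}

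\textbf{Reduction to $g$.} At a goal, admissibility and nonnegativity force $h(n_g)=0$. Therefore $f(n_g)=g(n_g)$, and $g(n_g)$ is the cost of an actual start-to-goal path.

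\textbf{Lower bound $C^*\leq g(n_g)$.} Since $g(n_g)$ is the cost of a real solution path, it is at least the optimal cost $C^*$. Combining with the upper bound gives $C^*\leq w f_{\min}$, which is the contrapositive of the lemma.

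\textbf{Main obstacle.} The only delicate point is the FOCAL invariant at the moment of selection. In particular, it must hold immediately after reopening, when $f_{\min}$ may have decreased. I would handle this by noting that the rebuild line recomputes FOCAL under the new threshold $w f'_{\min}$ before $f_{\min}$ is updated and before the next selection. Consequently, every node selected by the guided branch satisfies the threshold relative to the $f_{\min}$ value current at that selection.
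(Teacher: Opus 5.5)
Your proof is correct and is essentially the paper's argument: the paper also combines the acceptance bound $g(n)\leq w f_{\min}$ with $g(n)\geq C^*$ to reach a contradiction when $f_{\min}<C^*/w$. You make explicit the steps the paper leaves implicit: eligibility of both selection branches, via the FOCAL invariant and the OPEN-head property, and $h(n_g)=0$, so that $f(n_g)=g(n_g)$.
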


\begin{proof}
If $f_{\min}<C^*/w$, every acceptable goal would have
$g(n)\leq w f_{\min}<C^*$. This contradicts the definition of $C^*$.
\end{proof}

This necessary threshold motivates the OPEN-head branch of PFS. By expanding a
current minimum-$f$ node with probability $1-p$, PFS attempts to clear the
minimum-$f$ plateau and raise $f_{\min}$ toward $C^*/w$, thereby raising the
threshold and potentially admitting additional guided nodes. Once
$f_{\min}\geq C^*/w$, any optimal goal node that
has already been generated and remains in OPEN satisfies
$f(n)=C^*\leq w f_{\min}$ and is therefore in FOCAL.

PDPS inherits the same threshold motivation; the transfer experiments test
whether its potential-based guided policy can exploit newly eligible nodes as
effectively as PFS.
 \section{Experiments}

We evaluate PFS and PDPS through four questions: whether they improve their
deterministic parents, FS and DPS (RQ1); how lower-bound advancement changes
FOCAL and explains their behavior (RQ2); how performance varies with $p$
(RQ3); and whether the scheduler benefits their anytime variants, APFS and
APDPS (RQ4). PFS is the primary method, while PDPS tests whether the mechanism
transfers to potential-based guidance.

\begin{table*}[t]
\centering
\small
\setlength{\tabcolsep}{5pt}
\ifdefined\nostatisticaltests
    \caption{First-solution performance over the seven tested bounds at
    $p=0.70$. P/D denotes
    probabilistic/deterministic; K and M denote thousands and millions.
    Mean expansions use all runs. Within each setting, an unsuccessful run is
    assigned the largest expansion count observed across the two compared
    algorithms, and Exp. ratio is the ratio of the resulting P/D means.
    Unsuccessful runs contribute the time limit to Time P/D.}
\else
    \caption{Summary for $p=0.70$ and
    $w\in\{1.05,1.10,1.25,1.50,2.00\}$. Ratios are
    probabilistic/deterministic; values below 1 are better. Success P/D uses all
    matched runs, while expansion statistics use instances solved by both
    methods. Coverage W/T/L counts settings with higher/equal/lower
    probabilistic success, and median $\Delta$ pp is probabilistic minus
    deterministic. Median exp. P/D is the median across settings of each
    method's within-setting median (K denotes thousands); exp. ratio is the
    median of setting-level paired ratios. Capped mean time assigns the time
    limit to unsuccessful runs. The rows aggregate 15 N-Puzzle, 20 Pancake, ten
    PFS/FS TSP settings; five PDPS/DPS settings form the secondary transfer
    comparison.}
\fi
\label{tab:headline}
\ifdefined\nostatisticaltests
\begin{tabular}{llrrrr}
\hline
Pair & Domain & Success P/D (\%) & Mean exp. P/D & Exp. ratio & Time P/D (s) \\
\hline
PFS/FS & N-Puzzle & 83.6/44.3 & 748.0K/1.902M & 0.393 & 51.11/169.25 \\
       & Pancake  & 72.0/70.2 & 21.4K/24.5K & 0.874 & 84.51/90.04 \\
       & TSP-40   & 99.4/78.6 & 25.8K/446.2K & 0.058 & 3.56/82.98 \\
       & TSP-50   & 96.4/58.3 & 67.2K/526.8K & 0.128 & 14.93/135.36 \\
PDPS/DPS & N-Puzzle & 77.7/65.4 & 910.2K/1.203M & 0.757 & 71.41/112.10 \\
         & Pancake  & 95.8/94.0 & 4.1K/5.1K & 0.793 & 13.11/23.65 \\
         & TSP-40   & 96.0/89.3 & 9.1K/15.5K & 0.586 & 22.21/42.87 \\
         & TSP-50   & 88.7/79.1 & 20.2K/30.7K & 0.658 & 44.81/70.51 \\
\hline
\end{tabular}
\else
\begin{tabular}{llrrrrrr}
\hline
Pair & Domain & Success P/D (\%) & Coverage W/T/L & Median $\Delta$ pp & Median exp. P/D & Exp. ratio & Time P/D (s) \\
\hline
PFS/FS & N-Puzzle & 82.6/46.8 & 15/0/0 & +36.0 & 54.4K/600.7K & 0.135 & 53.98/161.71 \\
       & Pancake  & 72.7/70.0 & 4/13/3 &  +0.0 & 201/148 & 1.346 & 82.43/90.48 \\
       & TSP      & 99.2/81.9 & 9/1/0  & +17.0 & 1.8K/110.3K & 0.034 & 4.76/73.55 \\
PDPS/DPS & N-Puzzle & 72.3/64.7 & 12/2/1 & +6.0 & 192.1K/473.6K & 0.607 & 87.45/111.13 \\
         & Pancake  & 94.0/93.2 & 2/17/1 & +0.0 & 188/144 & 1.322 & 18.53/20.72 \\
         & TSP      & 99.0/98.8 & 1/4/0  & +0.0 & 293/233 & 1.309 & 5.89/8.23 \\
\hline
\end{tabular}
\fi
\end{table*}

\begin{figure*}[t]
    \centering
    \includegraphics[width=0.94\textwidth]{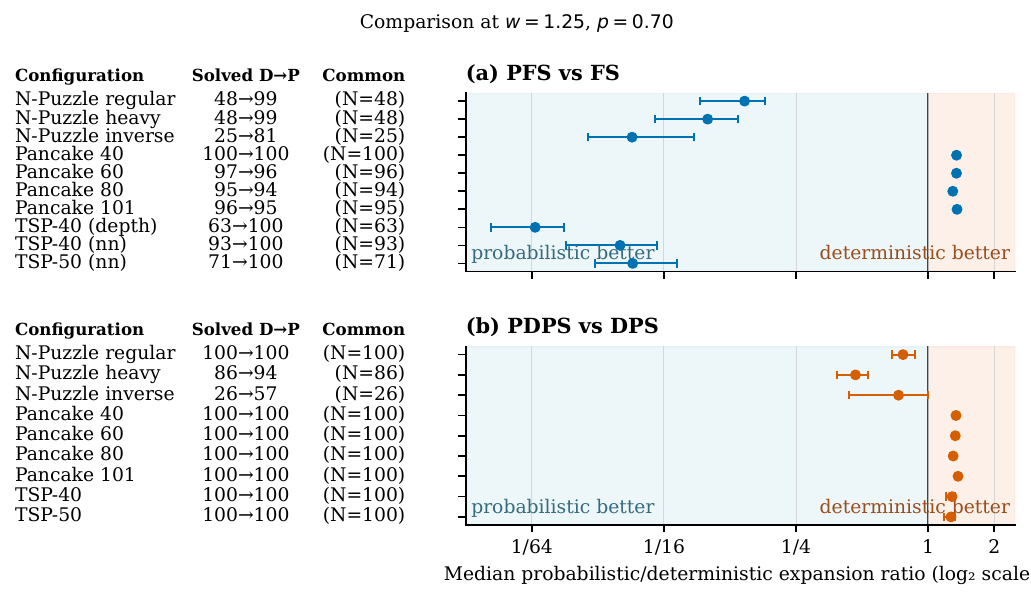}
    \caption{Probabilistic/deterministic node-expansion ratios at $w=1.25$,
    $p=0.70$.
    \ifdefined\nostatisticaltests
    Labels report deterministic$\rightarrow$probabilistic solved counts.
    \else
    Whiskers are 95\% paired-bootstrap intervals; labels report
    deterministic$\rightarrow$probabilistic solved counts and the number $N$
    solved by both.
    \fi
    The DPS-family rows use the Envelope FOCAL implementation. The vertical
    line denotes equal effort: points to its left favor the
    probabilistic method, whereas points to its right favor the deterministic
    counterpart.}
    \label{fig:central-effects}
\end{figure*}

\subsection{Experimental Setup}

\begin{figure*}[p]
    \centering
    \begin{minipage}{0.32\textwidth}
        \centering
        \includegraphics[width=\linewidth]{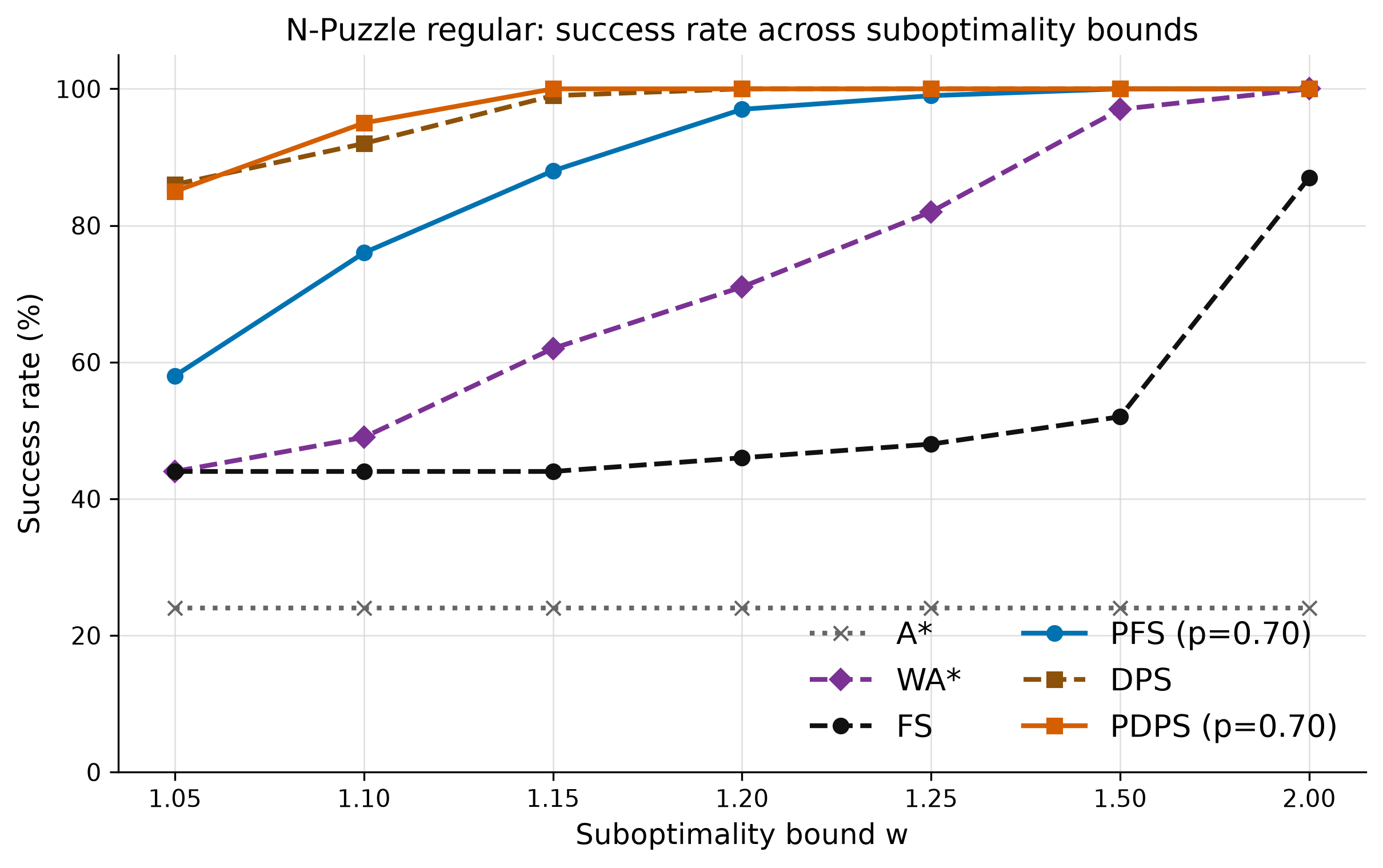}\\[-0.5ex]
        \small (a) Regular costs
    \end{minipage}\hfill
    \begin{minipage}{0.32\textwidth}
        \centering
        \includegraphics[width=\linewidth]{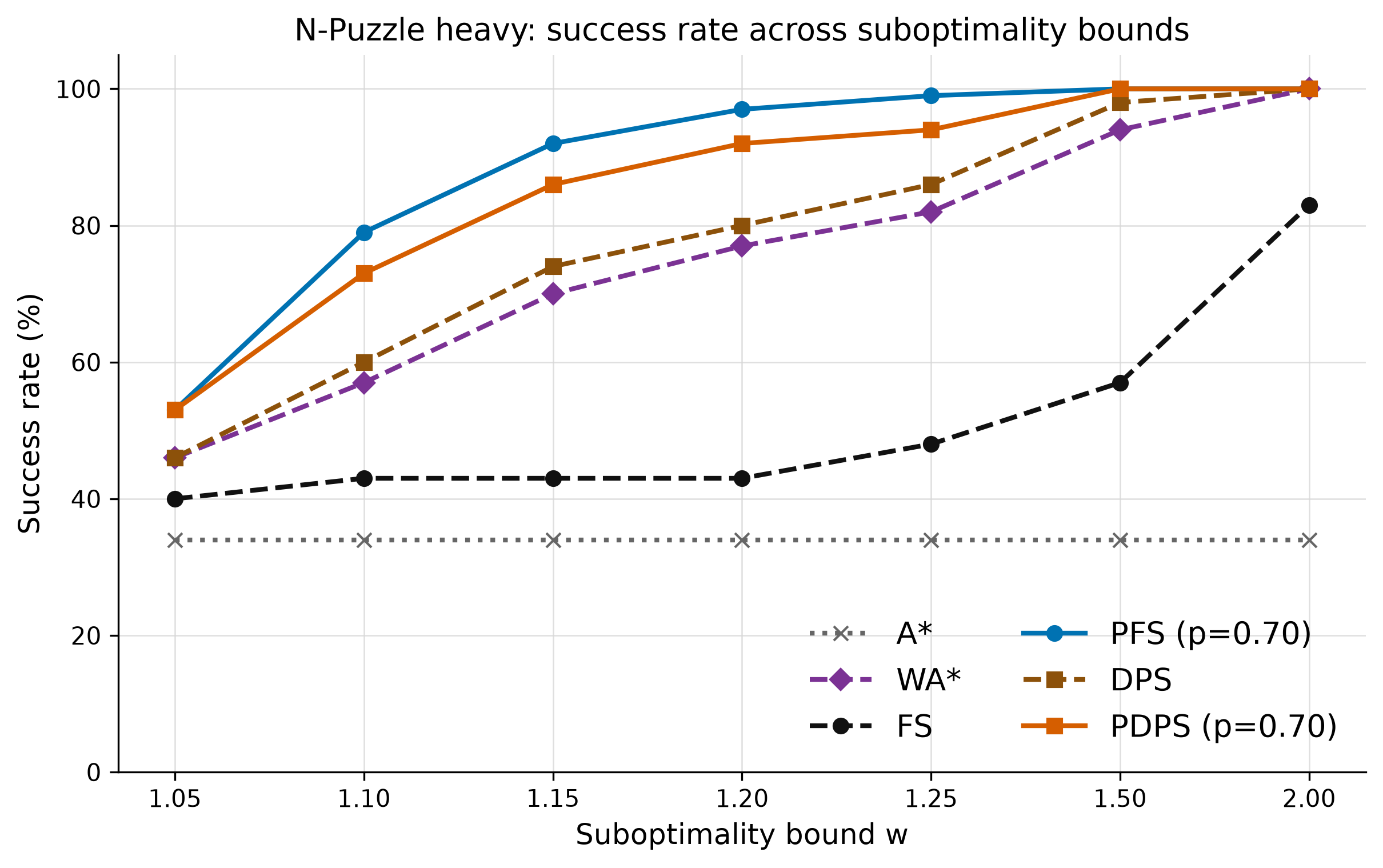}\\[-0.5ex]
        \small (b) Heavy costs
    \end{minipage}\hfill
    \begin{minipage}{0.32\textwidth}
        \centering
        \includegraphics[width=\linewidth]{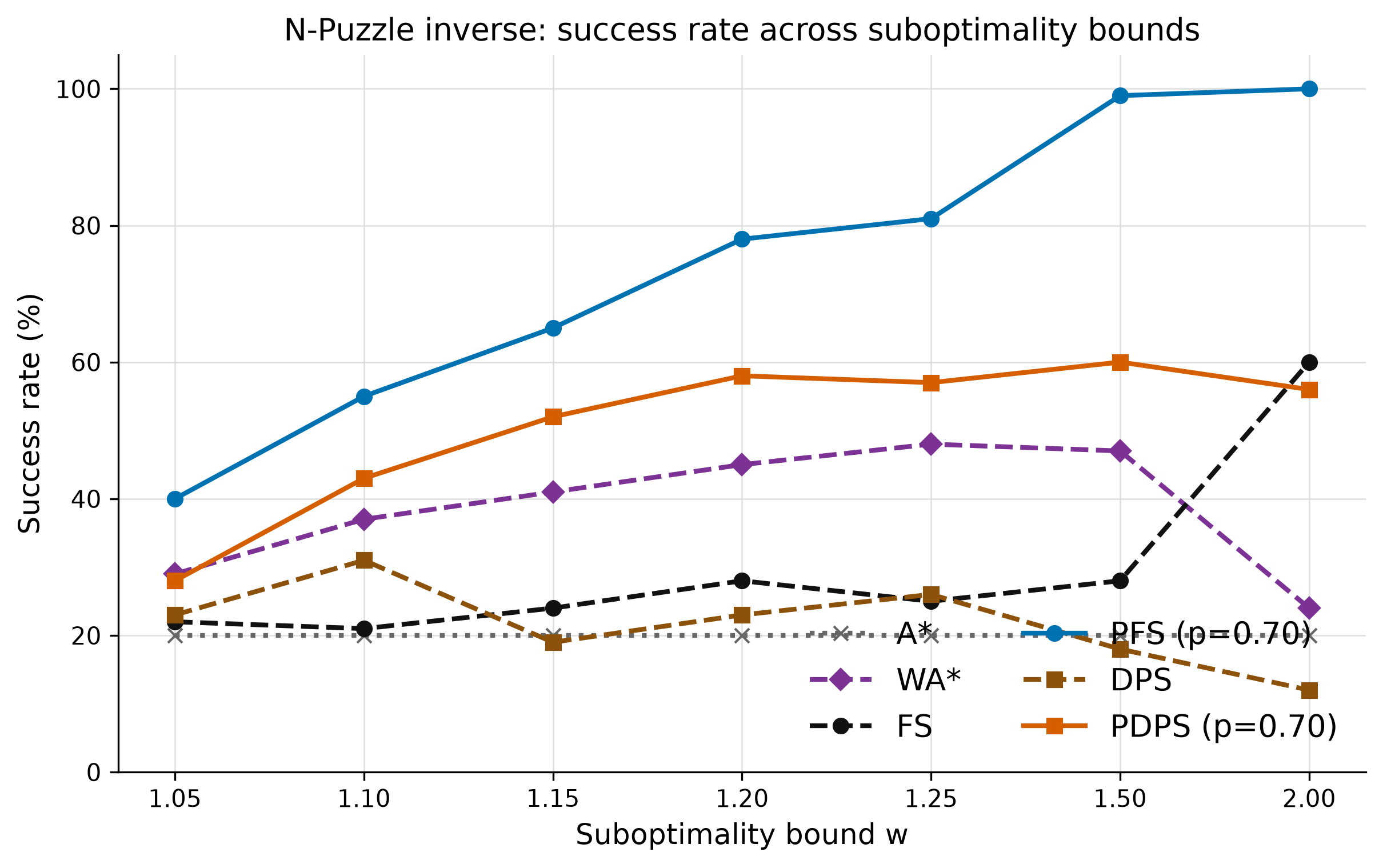}\\[-0.5ex]
        \small (c) Inverse costs
    \end{minipage}
    \caption{Success rate on Korf's 100 N-Puzzle instances across
    suboptimality bounds. PFS and PDPS use $p=0.70$; the DPS-family curves use
    Envelope FOCAL. Deterministic baselines use the same weight grid where
    applicable.}
    \label{fig:npuzzle-success-p070}

    \begin{minipage}{0.32\textwidth}
        \centering
        \includegraphics[width=\linewidth]{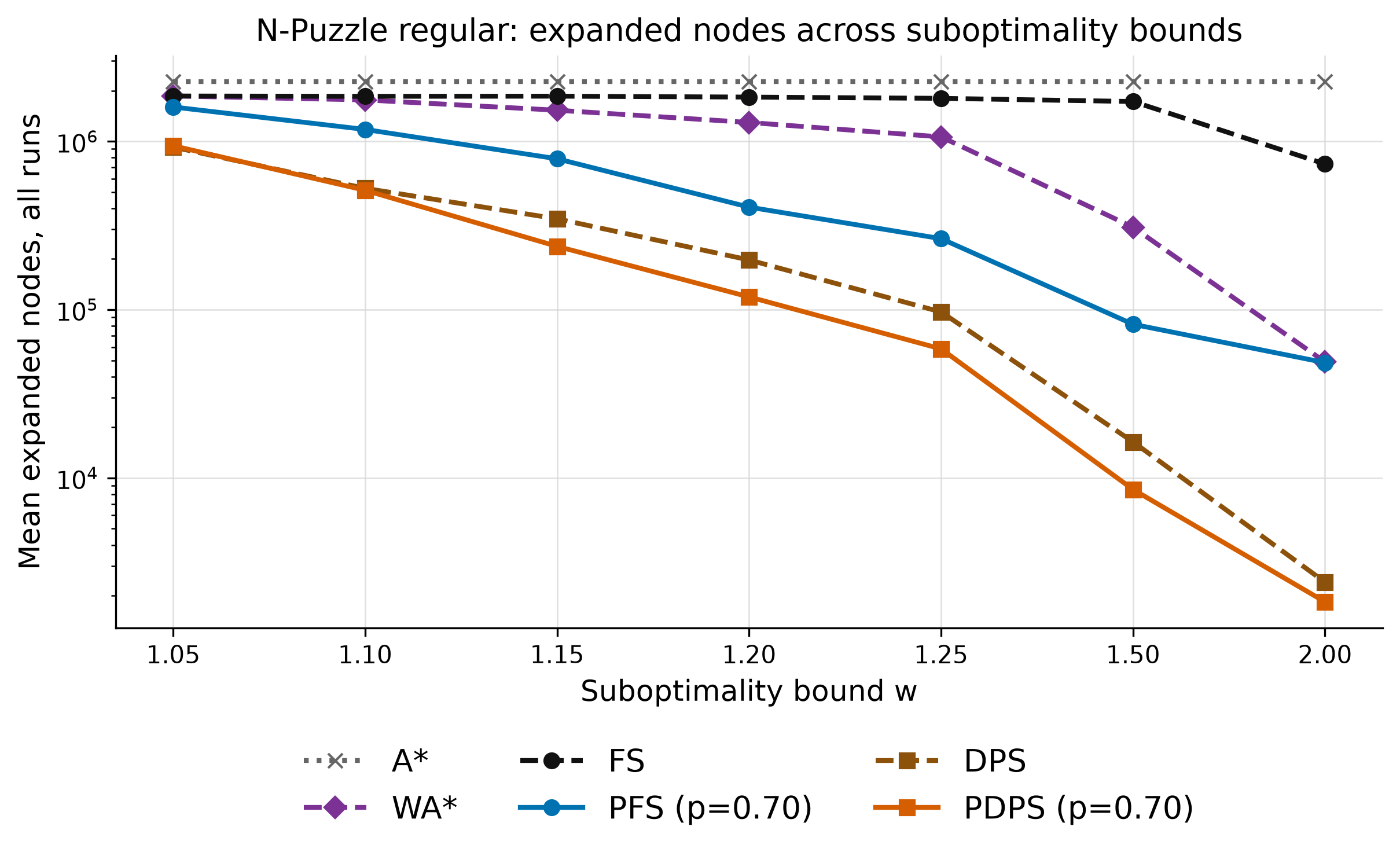}\\[-0.5ex]
        \small (a) Regular costs
    \end{minipage}\hfill
    \begin{minipage}{0.32\textwidth}
        \centering
        \includegraphics[width=\linewidth]{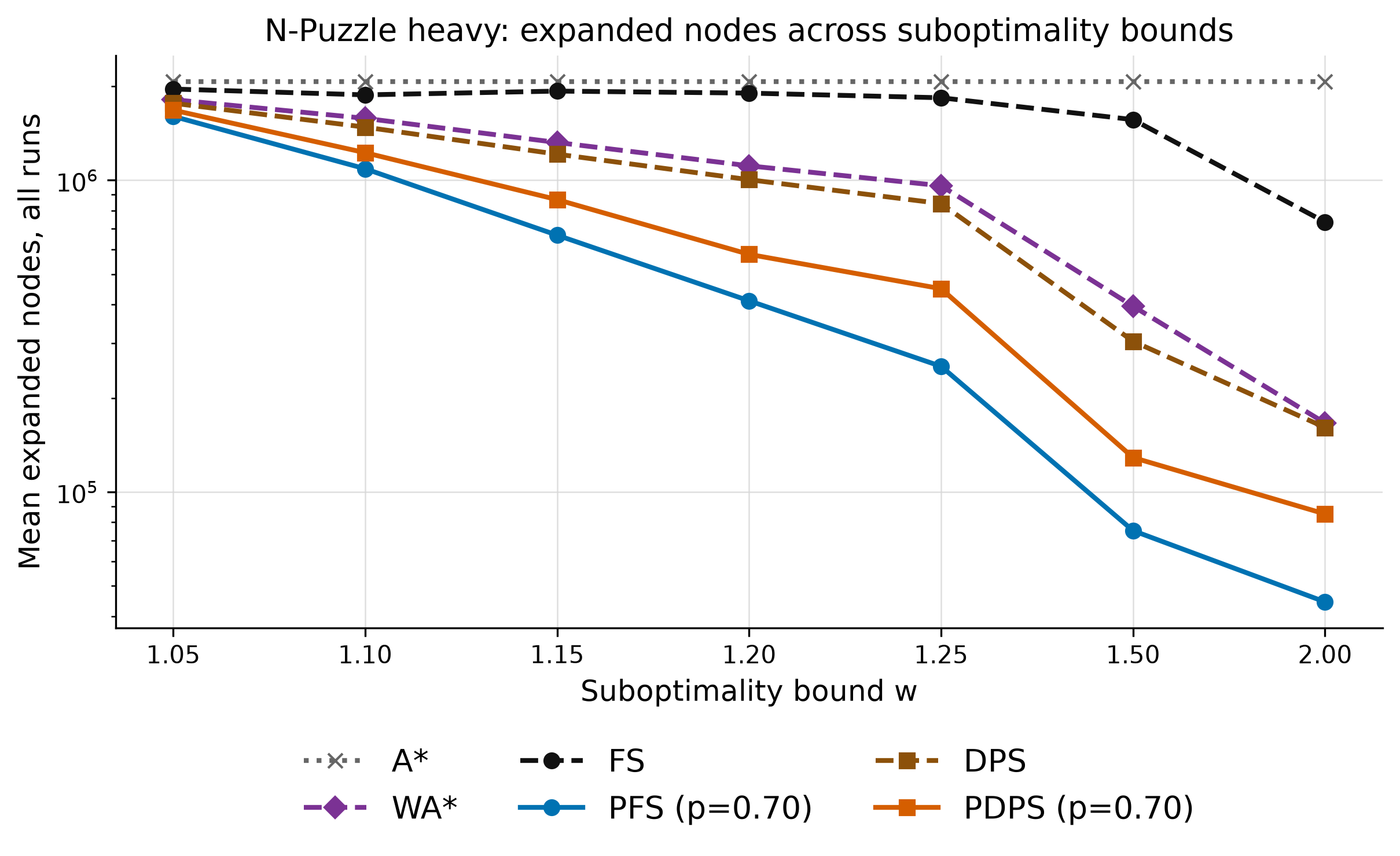}\\[-0.5ex]
        \small (b) Heavy costs
    \end{minipage}\hfill
    \begin{minipage}{0.32\textwidth}
        \centering
        \includegraphics[width=\linewidth]{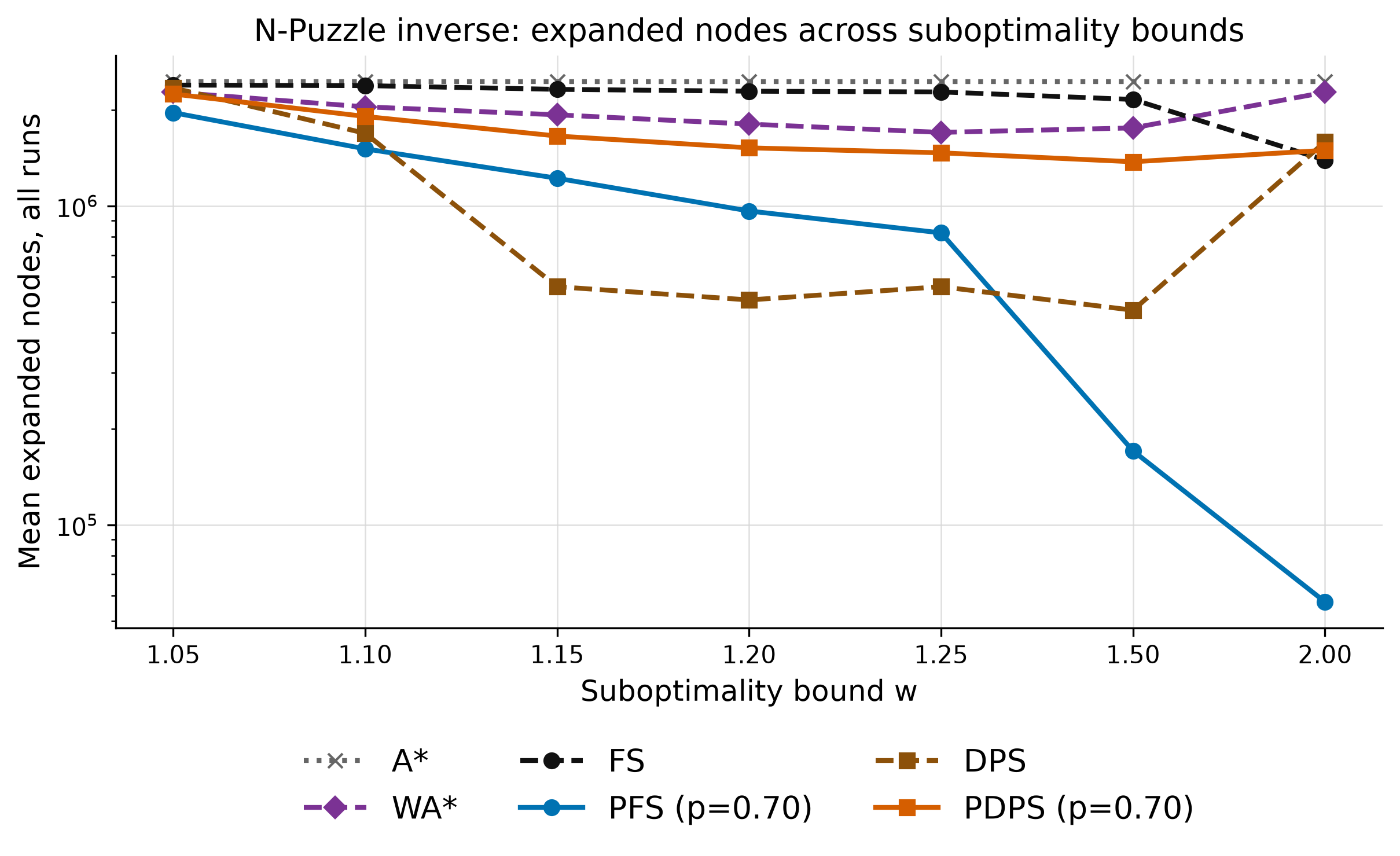}\\[-0.5ex]
        \small (c) Inverse costs
    \end{minipage}
    \caption{Mean node expansions over all Korf N-Puzzle runs, including
    unsuccessful executions.}
    \label{fig:npuzzle-expanded-all-p070}

    \begin{minipage}{0.32\textwidth}
        \centering
        \includegraphics[width=\linewidth]{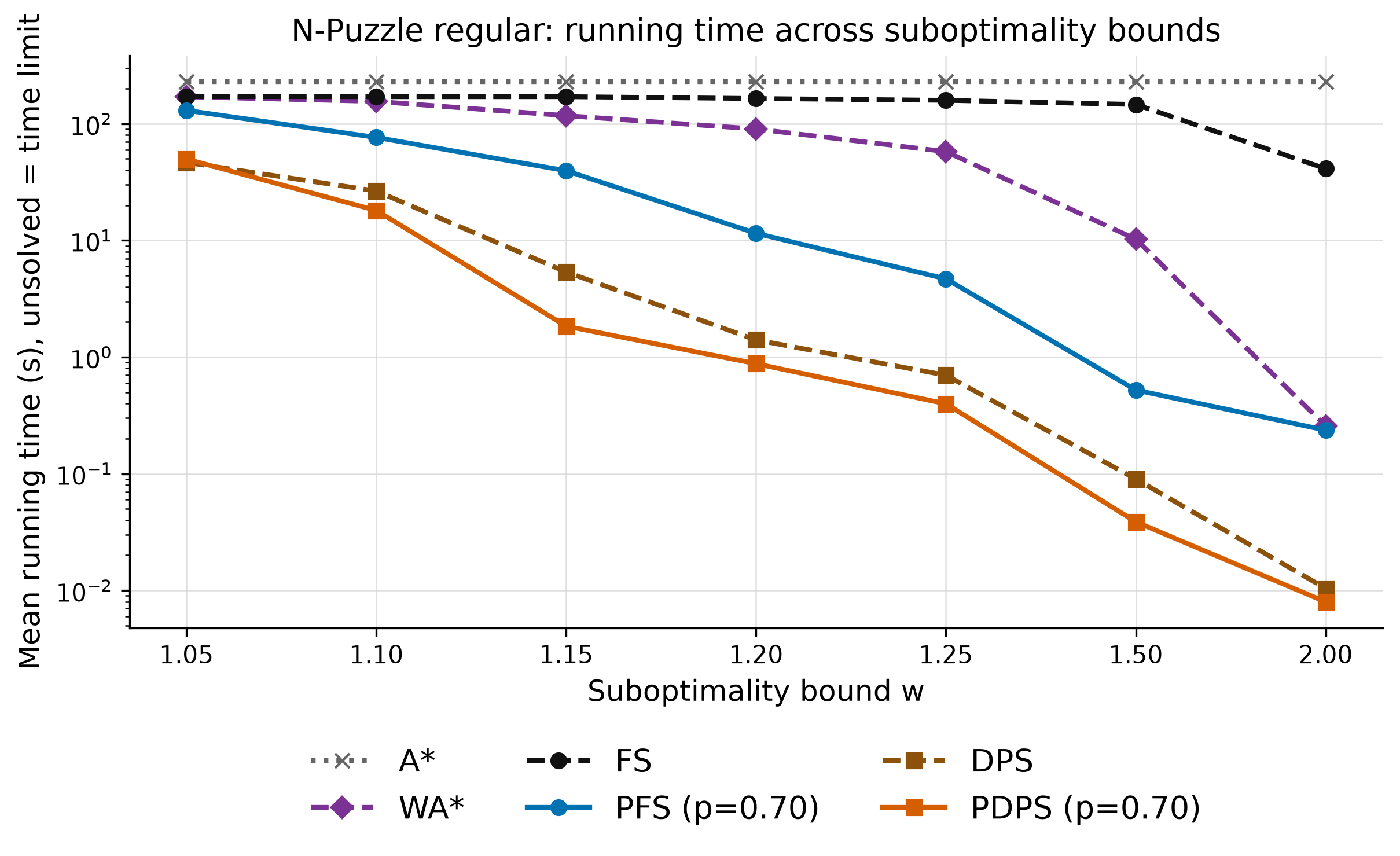}\\[-0.5ex]
        \small (a) Regular costs
    \end{minipage}\hfill
    \begin{minipage}{0.32\textwidth}
        \centering
        \includegraphics[width=\linewidth]{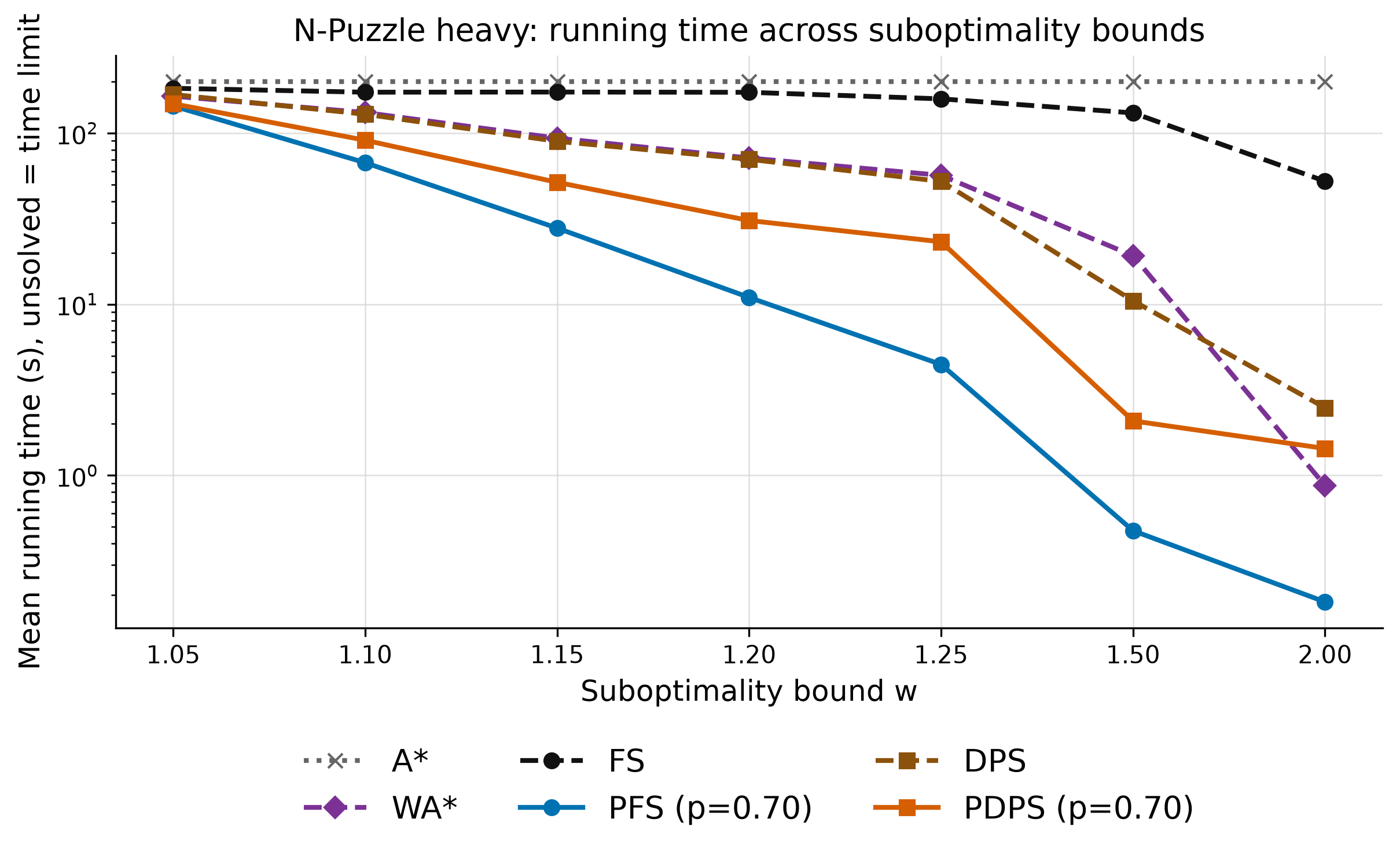}\\[-0.5ex]
        \small (b) Heavy costs
    \end{minipage}\hfill
    \begin{minipage}{0.32\textwidth}
        \centering
        \includegraphics[width=\linewidth]{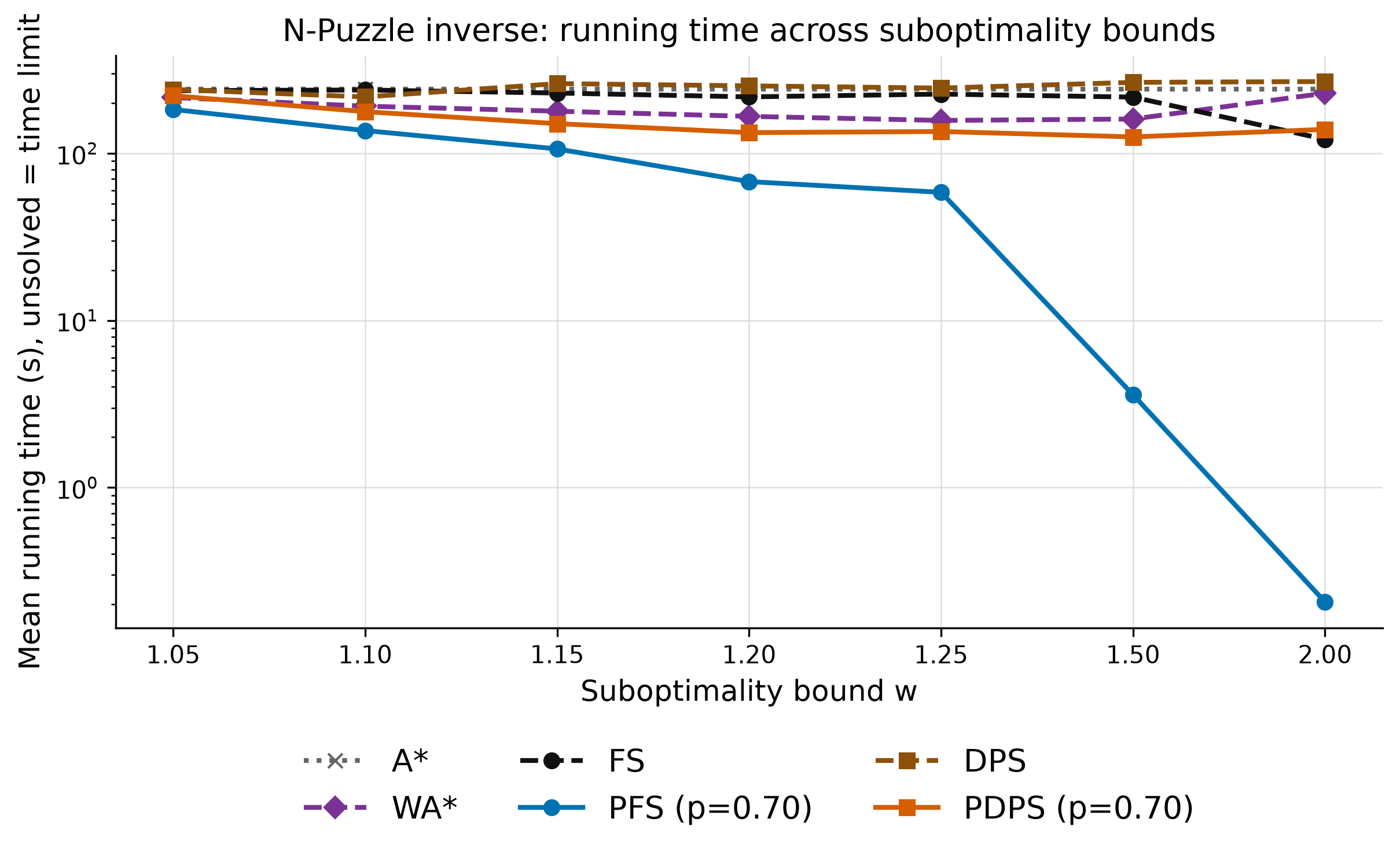}\\[-0.5ex]
        \small (c) Inverse costs
    \end{minipage}
    \caption{Mean running time over all Korf N-Puzzle runs, with unsuccessful
    executions assigned the time limit.}
    \label{fig:npuzzle-runtime-p070}

    \begin{minipage}[t]{0.48\textwidth}
        \centering
        \includegraphics[width=0.94\linewidth]{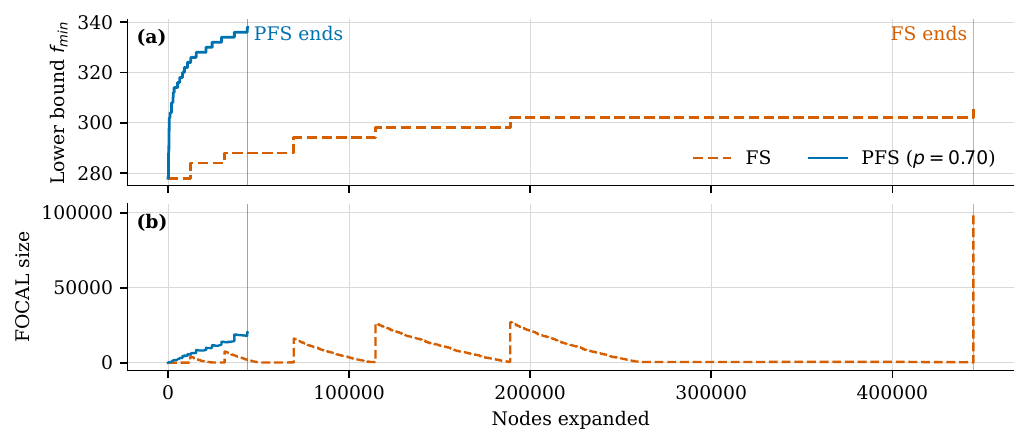}
        \captionsetup{width=\linewidth}
        \caption{Lower-bound and FOCAL-size trajectories for heavy N-Puzzle
        \texttt{npuzzle\_0039}, $w=1.25$, $p=0.70$, seed 42.
        \ifdefined\nostatisticaltests\else The instance is rank 24/48 by paired
        PFS/FS expansion ratio.\fi\ PFS terminates at 43,972 expansions and FS
        at 444,774.}
        \label{fig:mechanism-trajectory}
    \end{minipage}\hfill
    \begin{minipage}[t]{0.48\textwidth}
        \centering
        \includegraphics[width=0.72\linewidth]{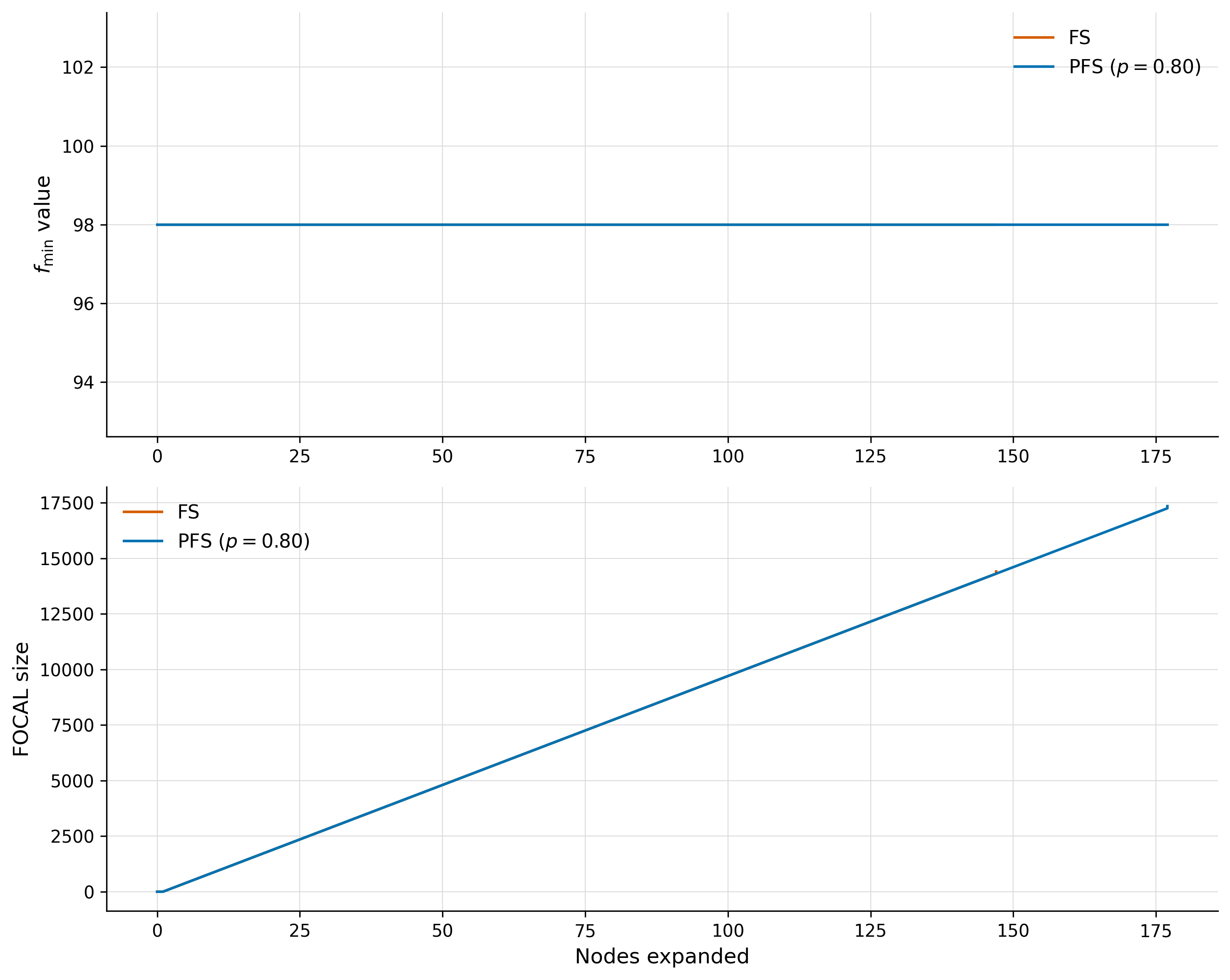}
        \captionsetup{width=\linewidth}
        \caption{Lower-bound and FOCAL trajectories for Pancake-101
        \texttt{pancake\_0071} ($w=1.25$, $p=0.80$). FS expands 147
        nodes and PFS expands 177.}
        \label{fig:pancake-trajectory}
    \end{minipage}

\end{figure*}

The benchmarks cover Korf's 100 15-Puzzle instances
\citep{korf1985depth} with unit, tile-value, and inverse-tile costs
\citep{gilon2017weighted}; 100-instance Pancake sets of sizes 40, 60, 80, and
101; and two Euclidean TSP datasets, each has 100 instances, containing
40 and 50 cities, respectively. The anytime study uses 81 small and 234 medium metric instances
of the Generalized Covering Traveling Salesperson Problem (GCTSP), generated
from TSPLIB instances \citep{reinelt1991tsplib}. In GCTSP, a tour visits facilities until they
collectively cover a required customer quota and then returns to the depot
\citep{shaelaie2014generalized,cohen2018anytime}.

For both heuristics, N-Puzzle uses cost-adjusted Manhattan distance plus linear
conflict \citep{hansson1992criticizing}, while Pancake uses GAP
\citep{helmert2010landmark}. For TSP,
the primary heuristic combines a minimum spanning tree (MST) over the unvisited
cities with connections from the current city and back to the depot. The MST
therefore lower-bounds the cost required to connect the cities that remain in
an unfinished tour. FS and PFS use depth or nearest-neighbor secondary guidance
on TSP-40 and nearest-neighbor guidance on TSP-50. DPS and PDPS use the same
primary heuristic as their FS and PFS counterparts; their node ordering
does not use the secondary heuristic. DPS and PDPS use the same FOCAL
implementation, called Envelope FOCAL. It orders OPEN by $f$ and selects
the maximum-potential group among FOCAL-eligible states.

We test $w\in\{1.05,1.10,1.15,1.20,1.25,1.50,2.00\}$ and
$p\in\{0.60,0.70,0.80\}$. We use $p=0.70$ as the main setting for all domains
and report $p=0.60$ and $p=0.80$ in the sensitivity analysis;
\ifdefined\nostatisticaltests the seven-bound first-solution results contain
50,400 runs.\else the canonical five-bound first-solution cohort contains
34,000 runs.\fi\ Generated-node limits are five million for N-Puzzle and Pancake, and
twenty million for TSP and anytime GCTSP. The datasets and source code are
provided in the supplementary material.

\paragraph{Computing environment.}
Algorithms are implemented in C++17 (\texttt{g++ -O3}) and evaluated under Linux on Intel i5-13500 and AMD Ryzen Threadripper PRO 5975WX workstations. Each run uses a single dedicated CPU thread with a 300-second wall-clock limit and memory monitoring.

\ifdefined\nostatisticaltests
\subsection{RQ1: Does PFS Improve Success and Search Effort?}
\else
\subsection{RQ1: Does PFS Improve Coverage and Search Effort?}
\fi

\ifdefined\nostatisticaltests
Table~\ref{tab:headline} reports success, expansions, and capped running
time over all seven tested bounds. Each row of
Figure~\ref{fig:central-effects}
places the probabilistic/deterministic expansion ratio beside the corresponding
solved counts, so efficiency can be read together with the ability to find a
solution.
PFS gives clear improvements on N-Puzzle and TSP. On N-Puzzle, it raises
success from 44.3\% to 83.6\%, reduces mean penalized expansions from 1.902M
to 748.0K (ratio 0.393), and reduces capped mean time from 169.25 to 51.11
seconds. On TSP-40, success rises from 78.6\% to 99.4\%, the expansion ratio is
0.058, and mean time falls from 82.98 to 3.56 seconds. TSP-50 shows the same
pattern: 96.4\% versus 58.3\% success, a 0.128 expansion ratio, and 14.93
versus 135.36 seconds. On Pancake, the two methods have similar success
(72.0\% versus 70.2\%). Under Table~\ref{tab:headline}'s all-run penalty, PFS
has a lower expansion ratio (0.874) and capped mean time (84.51 versus 90.04
seconds). RQ2 explains why this aggregate result does not imply lower effort
on the Pancake instances solved by both methods.

The all-run comparison also favors PDPS over DPS.
PDPS/DPS success is 77.7\%/65.4\% on N-Puzzle, 95.8\%/94.0\% on Pancake,
96.0\%/89.3\% on TSP-40, and 88.7\%/79.1\% on TSP-50. The corresponding
penalized expansion ratios are 0.757, 0.793, 0.586, and 0.658, while capped
mean time is also lower in every domain. Runtime for both methods is sensitive
to the FOCAL implementation because each change in $f_{\min}$ changes the
potential ordering.

Using TSP-50 as an example, Table~\ref{tab:tsp50-detail} illustrates both the
benefit of PFS and when probabilistic DPS helps or adds work. For $w\leq1.25$,
PFS solves 77--100 instances versus
25--71 for FS, with mean paired expansion ratios of 0.061--0.285 and runtime
ratios of 0.039--0.332 on commonly solved instances. At $w=2$, both methods solve every
instance, but the expansion and runtime ratios rise above 1. PDPS similarly
improves DPS at the tight bounds: for $w\leq1.15$, it solves 40--99 instances
versus 12--95, and both paired ratios remain below 1. Once both solve all
instances ($w\geq1.20$), however, the OPEN-head branch adds work and the paired
ratios exceed 1. Its greater tight-bound success nevertheless produces the
favorable all-run TSP-50 values in Table~\ref{tab:headline}.

Figure~\ref{fig:central-effects} fixes $w=1.25$ to show how the expansion
effect varies by configuration rather than suggesting that the probabilistic
scheduler is uniformly beneficial. PFS lies well to the left of equal effort
on N-Puzzle and TSP but
slightly to the right on Pancake. At $w=1.25$, PDPS lies to the left on
N-Puzzle but to the right on Pancake and TSP, where both DPS variants solve
every instance. Advancing $f_{\min}$ helps only when the guided policy can
exploit newly eligible states; otherwise, OPEN-head selections add work.
Figures~\mbox{\ref{fig:npuzzle-success-p070}--
\ref{fig:npuzzle-runtime-p070}} conclude RQ1 by comparing N-Puzzle success,
node expansions, and runtime across algorithms and bounds.

\begin{table}[t]
\centering
\small
\setlength{\tabcolsep}{2.5pt}
\renewcommand{\arraystretch}{1.08}
\captionsetup{skip=3pt}
\caption{TSP-50 results. Expansion and runtime entries are mean paired
probabilistic/deterministic ratios on commonly solved instances; values below
1 favor the probabilistic method.}
\label{tab:tsp50-detail}
\begin{tabular*}{\columnwidth}{@{\extracolsep{\fill}}r rrr @{\hspace{6pt}} rrr@{}}
\hline
& \multicolumn{3}{c}{PFS/FS} & \multicolumn{3}{c}{PDPS/DPS} \\
\cline{2-4}\cline{5-7}
$w$ & Solved & Exp. & Time & Solved & Exp. & Time \\
\hline
1.05 & 77/25   & 0.061 & 0.039 & 40/12   & 0.757 & 0.390 \\
1.10 & 98/24   & 0.112 & 0.129 & 82/47   & 0.801 & 0.174 \\
1.15 & 100/38  & 0.154 & 0.213 & 99/95   & 0.928 & 0.923 \\
1.20 & 100/55  & 0.285 & 0.307 & 100/100 & 1.437 & 1.146 \\
1.25 & 100/71  & 0.165 & 0.332 & 100/100 & 1.635 & 1.453 \\
1.50 & 100/95  & 0.878 & 1.475 & 100/100 & 2.032 & 2.051 \\
2.00 & 100/100 & 1.262 & 1.923 & 100/100 & 1.298 & 1.146 \\
\hline
\end{tabular*}
\end{table}

\else
Table~\ref{tab:headline} summarizes cross-domain coverage, paired expansions,
and capped running time over five reported bounds. Figure~\ref{fig:central-effects}
shows configuration-level expansion effects at $w=1.25$, while
Figures~\ref{fig:npuzzle-success-p070}--\ref{fig:npuzzle-runtime-p070} show the
N-Puzzle results over all seven tested bounds.

PFS gives the clearest gains. It raises success from 46.8\% to 82.6\% on
N-Puzzle and from 81.9\% to 99.2\% on TSP; median paired expansion ratios are
0.135 and 0.034. With unsuccessful runs assigned the time limit, its mean times
are 53.98 versus 161.71 seconds and 4.76 versus 73.55 seconds, respectively.

\paragraph{Secondary DPS transfer.}
PDPS gives a smaller N-Puzzle improvement: success rises from 64.7\% to 72.3\%
and the expansion ratio is 0.607. On Pancake, PFS and PDPS have similar success
to their counterparts but expansion ratios of 1.346 and 1.322. PDPS/TSP likewise
has near-equal success and a 1.309 ratio. The deterministic methods are therefore
more expansion-efficient in these comparisons. The absolute workloads are small:
on Pancake, the setting-level median counts are 148 for FS, 201 for PFS, 144 for
DPS, and 188 for PDPS; on 40-city TSP, DPS and PDPS likewise require only 233
and 293 expansions. Thus the ratios above 1 represent small absolute expansion
differences in these cases.

Heuristic discrimination is consistent with these domain differences. Manhattan-based
secondary keys directly estimate remaining tile movement. Pancake's
integer-valued GAP key is more tie-prone: a prefix reversal changes only its
endpoint boundary, so a state's successors take at most three GAP values.
DPS/PDPS combine GAP with $g$ in their potential and can distinguish equal-$h$
nodes at different depths, consistent with the higher Pancake coverage of the
DPS family.

At $w=1.25$ (Figure~\ref{fig:central-effects}), PFS solves 279/300 N-Puzzle
instances versus 121/300 for FS and 200/200 TSP runs versus 156/200. Its heavy
N-Puzzle expansion ratio is 0.099 (95\% CI $[0.057,0.136]$
\ifdefined\nostatisticaltests\else, $q=3.2\times10^{-12}$\fi); regular and
inverse ratios are 0.146 and 0.045. PDPS
solves 249/300 versus 228/300, with ratios 0.607, 0.473, and 0.526.

Thus PFS provides the main improvement in aggregate success and capped mean
time, although its expansion gains remain domain-dependent. The secondary PDPS
results are mixed and delimit how far the scheduler transfers across guided
policies. Returned costs may also be higher while remaining inside the requested
bound; the main benefit is faster discovery of a bounded solution.
\fi

\subsection{RQ2: How Does Lower-Bound Advancement Change FOCAL?}

The OPEN-head branch helps only when clearing a minimum-$f$ plateau raises
$f_{\min}$, enlarges the threshold $w f_{\min}$, and admits nodes that the
guided FOCAL policy can exploit. Otherwise, an OPEN-head selection merely
replaces a guided expansion. Figures~\ref{fig:mechanism-trajectory} and
\ref{fig:pancake-trajectory} illustrate these two cases.

On heavy N-Puzzle (Figure~\ref{fig:mechanism-trajectory}), PFS clears
$f_{\min}$ plateaus earlier. Each threshold increase can admit a batch of
nodes into FOCAL, after which the secondary heuristic guides the search through
the enlarged eligible set. PFS terminates after 43,972 expansions, compared
with 444,774 for FS. This is the intended mechanism: lower-bound advancement
gives the guided policy earlier access to useful nodes.

On Pancake (Figure~\ref{fig:pancake-trajectory}), integer path costs and GAP
values place many states at the same $f$. Here $f_{\min}=98$ throughout, and
the initial threshold $w f_{\min}=122.5$ already exceeds the returned solution
cost 117. Although the goal is generated later, it is immediately eligible
without any increase in $f_{\min}$. PFS therefore performs the same 147 guided
expansions as FS plus 30 OPEN-head expansions, without creating a more useful
FOCAL envelope. Thus raising the lower bound is not sufficient by itself: PFS
benefits only when the resulting FOCAL admissions improve the choices available
to the guided policy.

\subsection{RQ3: How Sensitive Is PFS to the Probability $p$?}

Across $p\in\{0.60,0.70,0.80\}$ and the seven tested bounds, PFS consistently
reduces expansions on commonly solved N-Puzzle and TSP instances, with median
setting-level ratios of 0.121--0.151 and 0.021--0.029. On Pancake, the ratios
are 1.222--1.629 and decrease as $p$ increases, because fewer selections use
the unhelpful OPEN-head branch. These paired ratios exclude failures; at
$p=0.70$, the paired Pancake ratio is 1.354, whereas the slightly higher PFS
success rate yields the all-run penalized ratio 0.874 in
Table~\ref{tab:headline}. Thus, the conclusions of RQ1 are stable across the
tested probabilities, but no universal $p$ emerges. A suitable value appears to
depend on the domain--heuristic pair and its observed $f_{\min}$-plateau
structure: smaller $p$ favors lower-bound advancement when FOCAL admission is
delayed, whereas larger $p$ favors already effective guided selection.

\ifdefined\nostatisticaltests
\subsection{RQ4: Does Probabilistic Scheduling Improve Anytime Search?}

We compare AFS, APFS, ADPS, and APDPS on 81 small and 234 medium metric GCTSP
instances. All four use the admissible quota-Kruskal-forest (QKF) lower bound;
AFS and APFS use prize deficit as their secondary FOCAL key, whereas ADPS and
APDPS use potential ordering with Envelope FOCAL. The anytime schedule starts
at $w_0=3$, updates $w\leftarrow C_{\mathrm{best}}/f_{\min}-0.10$, and stops
at $w\leq1.05$.

Figure~\ref{fig:anytime-cumulative-solutions} shows that APFS finds at least
one solution for 81/81 small and 189/234 medium instances, compared with 76/81
and 82/234 for AFS. Envelope FOCAL APDPS solves 80/81 small and 74/234 medium
instances, versus 78/81 and 25/234 for ADPS. Thus APDPS improves its
deterministic parent, especially on the medium set, while APFS retains the
highest first-solution coverage among all four methods.
\else
\subsection{RQ4: Does Probabilistic Scheduling Improve Anytime Search?}

We compare AFS, APFS, ADPS, and APDPS on 81 small and 234 medium metric GCTSP
instances. All four use the admissible quota-Kruskal-forest (QKF) lower bound;
AFS and APFS use prize deficit as their secondary FOCAL key, whereas ADPS and
APDPS use potential ordering. The anytime schedule starts at $w_0=3$, updates
$w\leftarrow C_{\mathrm{best}}/f_{\min}-0.10$, and stops at $w\leq1.05$. Runs
use $p=0.70$ for APFS and APDPS.
\fi

\begin{figure*}[t]
    \centering
    \includegraphics[width=0.84\textwidth]{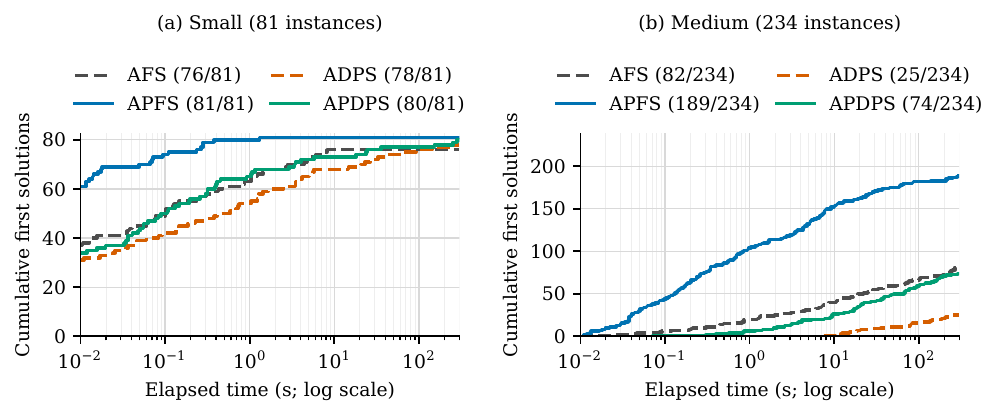}
    \caption{Cumulative number of matched GCTSP instances for which AFS, APFS,
    ADPS, or APDPS has found an incumbent by time $t$.}
    \label{fig:anytime-cumulative-solutions}

    \includegraphics[width=0.84\textwidth]{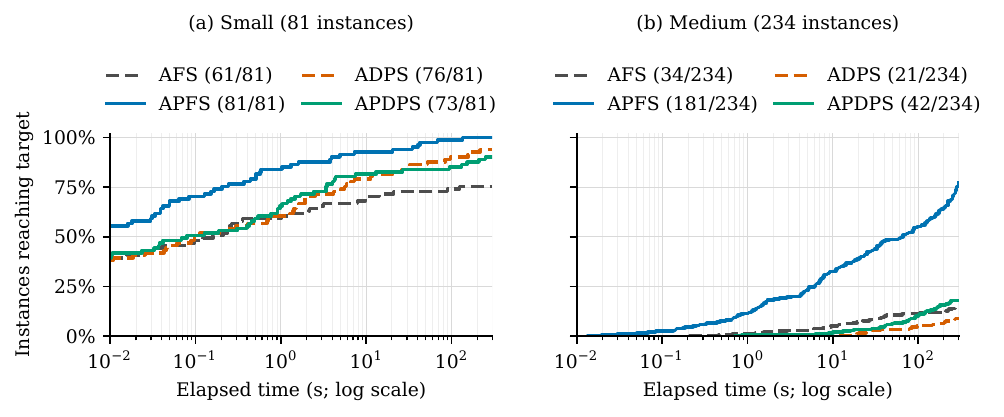}
    \caption{Time to reach a cost within 1\% of the lowest final value found by
    AFS, APFS, ADPS, or APDPS on each matched GCTSP instance.}
    \label{fig:anytime-target-attainment}
\end{figure*}

\ifdefined\nostatisticaltests
Figure~\ref{fig:anytime-target-attainment} evaluates subsequent solution
refinement. For each instance, the target is within 1\% of the lowest final
cost returned by any of the four evaluated methods. APFS reaches this target
on 81 small and 181 medium instances, compared with 61 and 34 for AFS.
ADPS/APDPS reach it on 76/73 small and 21/42 medium instances. APDPS therefore
doubles the medium target count of ADPS, although ADPS is slightly better on
the small set. On their common successes, APDPS/ADPS median expansion ratios
are 0.950 and 0.944, and runtime ratios are 0.513 and 0.735, for small and
medium instances, respectively.

The DPS-family result shows that probabilistic lower-bound advancement also
transfers to potential ordering when FOCAL admission limits progress. APFS
remains strongest because prize deficit explicitly directs newly eligible
states toward satisfying the residual quota, whereas dynamic potential does
not encode this feasibility signal.
\else
APFS/AFS solve 81/76 small and 189/82 medium instances
(Figure~\ref{fig:anytime-cumulative-solutions}).
Exact paired coverage tests are computed separately by dataset, with Holm
correction across the two tests. All five small-set discordances favor APFS
($q=0.0625$); all 107 medium-set discordances favor APFS
($q=2.5\times10^{-32}$).
\paragraph{Secondary DPS transfer.}
ADPS/APDPS solve 80/72 small and 59/43 medium instances.
All eight small discordances favor ADPS; on medium, 23 favor ADPS and seven
favor APDPS. Both dataset-level comparisons have Holm-adjusted
$q=0.0104$.

For Figure~\ref{fig:anytime-target-attainment}, $C_{\mathrm{ref},i}$ is the
lowest final cost returned by any of the four methods on instance $i$. AFS,
APFS, ADPS, and APDPS attain the 1\% target on 61, 81, 79, and 65 small
instances, and 35, 187, 47, and 22 medium instances. This common empirical
reference measures relative proximity and time, not an external optimality gap.

The FS family can order FOCAL by an independently chosen $h_2$; prize deficit
therefore directs AFS/APFS toward quota-feasible states. ADPS/APDPS instead
order nodes by dynamic potential and do not use this secondary key. At
$w_0=3$, the potential bound is already loose, so additional OPEN-head work
that raises $f_{\min}$ need not prioritize states that cover the remaining
prize. Consistent with this distinction, APDPS has lower coverage and target
attainment than ADPS. Its common-solved expansion ratios
are nevertheless 0.926 ($N=72$) and 0.114 ($N=36$), showing that lower
conditional effort does not offset the coverage loss.
\fi
 \section{Discussion and Conclusion}

PFS improves bounded-suboptimal focal search by following the guided FOCAL
policy with probability $p$ and expanding a minimum-$f$ OPEN node with
probability $1-p$. The latter branch can clear minimum-$f$ plateaus and raise
$f_{\min}$, so the larger threshold $w f_{\min}$ admits additional nodes for
guided selection while preserving the $w$-suboptimality guarantee.

Experiments on N-Puzzle, Pancake Sorting, and two TSP datasets show that this
mechanism improves search most clearly when delayed FOCAL admission is a
bottleneck. PFS substantially increases success and reduces expansions and
running time on N-Puzzle and TSP, while Pancake demonstrates the smaller gain
when the initial FOCAL is already sufficient. Envelope FOCAL PDPS improves the
all-run measures over DPS, although common-success results on Pancake and
looser TSP bounds show that OPEN-head selections add work when potential
guidance already succeeds. On anytime GCTSP, APFS solves 189/234 medium
instances versus 82/234 for AFS, while Envelope FOCAL APDPS solves 74/234
versus 25/234 for ADPS. APFS still leads all evaluated methods. Overall, active
$f_{\min}$ advancement improves focal search when FOCAL admission limits
progress and the guided policy can use the newly eligible states.
 
\appendix
\section{Appendix}

\subsection{Exact DPS Eligibility}
\label{app:dps-eligibility}

Under the main theorem's assumptions, an exact maximum-potential node lies in
$f(n)\leq w f_{\min}$. Let $m$ minimize $f$ in OPEN. For $h(m)>0$,
$u_w(m)=[w(g(m)+h(m))-g(m)]/h(m)=w+(w-1)g(m)/h(m)\geq1$.
Thus a maximum-potential $n$ satisfies $u_w(n)\geq1$, equivalently
$f(n)\leq w f_{\min}$. Exact DPS therefore need only consider FOCAL, which
contains a minimum-$f$ node whenever OPEN is nonempty and $w\geq1$.

\subsection{Admissible Quota--Kruskal-Forest Lower Bound}

For state $n$, let $x$ be its current vertex, $d$ the depot, $U(n)$ its covered
customers, and $R(n)=\max\{0,Q-|U(n)|\}$ the residual quota. Let $C_v(n)$ be
the uncovered customers covered by $v\notin\{x,d\}$ and sort the positive
capacities as $C_{(1)}\geq\cdots\geq C_{(m)}$. Set
$k(n)=\min\{j:\sum_{i=1}^{j}C_{(i)}(n)\geq R(n)\}$; every feasible completion
must visit at least $k(n)$ such vertices. For $R(n)>0$, form the complete
undirected graph on $\{x,d\}\cup\{v:C_v(n)>0\}$; QKF is the cost of its
cheapest $e(n)=k(n)+\mathbf{1}[x\neq d]$ acyclic edges, selected by Kruskal
\citep{kruskal1956shortest}. If $R(n)=0$, it is $0$ at $d$ and $c(x,d)$
otherwise. Every feasible completion contains such an acyclic subset no more
costly than itself; hence $h_{\mathrm{QKF}}(n)\leq h^*(n)$ and QKF is
admissible.

\ifdefined\nostatisticaltests\else
\subsection{Paired Statistical Procedures}

All comparisons match the parent and probabilistic method on the same instance
and configuration.  Coverage uses every pair.  If $b$ instances are solved
only by the parent and $c$ only by the probabilistic method, the exact
two-sided McNemar test evaluates
$X\sim\operatorname{Binomial}(b+c,1/2)$ \citep{mcnemar1947sampling}.  It tests whether discordant outcomes favor one
method more than expected under equal paired coverage.

Continuous metrics use instances solved by both methods.  Here
$y_i^{\mathrm{prob}}$ and $y_i^{\mathrm{parent}}$ are the values of the metric
being analyzed for matched instance $i$, obtained by the probabilistic method
and its deterministic parent, respectively.  We form paired ratios
$r_i=y_i^{\mathrm{prob}}/y_i^{\mathrm{parent}}$ and report their median and a
95\% percentile interval.  The paired bootstrap \citep{efron1979bootstrap}
resamples the $N$ observed pairs with replacement 10,000 times (seed 20260720)
and recomputes the median; it does not require 10,000 instances.

The two-sided Wilcoxon signed-rank test \citep{wilcoxon1945individual} applies
to $d_i=\log(y_i^{\mathrm{parent}}/y_i^{\mathrm{prob}})$: zero differences are
removed, $|d_i|$ are ranked, and positive and negative rank sums are compared.
It tests for a systematic paired shift without assuming normal metric values.
Holm's step-down procedure \citep{holm1979simple} adjusts the resulting
$p$-values within each parent--probabilistic family and metric to control the
family-wise error rate.  The 1\% win/tie/loss band is descriptive, not a test.
\fi
 \FloatBarrier
\bibliography{ref}
\end{document}